\definecolor{alg}{RGB}{46, 149, 186}
\definecolor{urldarkblue}{RGB}{1, 111, 255}
\newcommand{\parder}[2]{\frac{\partial #1}{\partial #2}}
\newcommand{\expect}[2]{\mathbb{E}_{#1}\Big[ #2 \Big]}
\newcommand\blfootnote[1]{
  \begingroup
  \renewcommand\thefootnote{}\footnote{#1}
  \addtocounter{footnote}{-1}
  \endgroup
}
\newcommand{\eg}{\textit{e.g.}}
\newcolumntype{L}[1]{>{\raggedright\arraybackslash}p{#1}}
\newcolumntype{C}[1]{>{\centering\arraybackslash}p{#1}}
\newcolumntype{R}[1]{>{\raggedleft\arraybackslash}p{#1}}
\def\real{\mathbb{R}}
\def\O{\mathcal{O}}
\def\J\mathbf{J}
\def\traj{\tau}
\def\x{\mathbf{x}}
\def\temp{\lambda}
\def\thetab{\mathrm{\boldsymbol{\theta}}}
\DeclarePairedDelimiterX{\infdivx}[2]{(}{)}{%
  #1\;\delimsize\|\;#2%
}
\newcommand{\dkldiv}{D_{KL}\infdivx}
\newcommand{\hide}[1]{}
\newcommand{\norm}[1]{\big|\big| #1 \big|\big|}
\DeclareMathOperator*{\argmax}{arg\,max} 
\DeclareMathOperator*{\argmin}{arg\,min}
\newtheorem{theorem}{Theorem}
\begin{document}
\title{Entropy Regularized Motion Planning \\via Stein Variational Inference}

\author{
        \authorblockN{Alexander Lambert\authorrefmark{1} and Byron Boots\authorrefmark{2}}
        \authorblockA{ Georgia Institute of Technology\authorrefmark{1}, University of Washington\authorrefmark{2}\\
        \texttt{alambert6@gatech.edu}, \texttt{bboots@cs.washington.edu}}
}

\maketitle

\begin{abstract}
Many Imitation and Reinforcement Learning approaches rely on the availability of expert-generated demonstrations for learning policies or value functions from data. Obtaining a reliable distribution of trajectories from motion planners is non-trivial, since it must broadly cover the space of states likely to be encountered during execution while also satisfying task-based constraints. We propose a sampling strategy based on variational inference to generate distributions of feasible, low-cost trajectories for high-dof motion planning tasks. This includes a distributed, particle-based motion planning algorithm which leverages a structured graphical representations for inference over multi-modal posterior distributions. We also make explicit connections to both approximate inference for trajectory optimization and entropy-regularized reinforcement learning. Video available here: \textcolor{urldarkblue}{\url{https://youtu.be/A15MbLhRAb4}}.
\end{abstract}

\IEEEpeerreviewmaketitle

\section{Motion Planning as Probabilistic Inference}
\blfootnote{\kern-1em RSS 2021 Workshop on Integrating Planning and Learning}
For a system with state $\x \in \real^d$ and dimesion $d$, we can define a trajectory as the continuous-time function $\traj \triangleq \x(t):t \rightarrow \real^d$. Given a start state $\x_0$, trajectory optimization aims to find the optimal trajectory $\traj^*$ which minimizes an objective functional $\mathcal{F}(\traj; \x_0)$. The latter encodes a penalty on non-smooth trajectories, and might include a cost on distance to a desired goal-state $\x_g$ for minimizing trajectory length. Since the solution must be \textit{feasible} and avoid collisions with obstacles, this requirement can be imposed by either inequality constraints, $\mathcal{H}(\tau) \leq 0$~\cite{trajopt}, or by including an additional penalty in the objective~\cite{ratliff2009chomp,zucker2013chomp}:
\begin{align}
    \mathcal{F}(\traj; \x_0) =  \frac{1}{\lambda}\mathcal{F}_{obs}(\traj) + \mathcal{F}_{smooth}(\traj; \x_0)
\end{align}
for a scalar regularization weight $\frac{1}{\lambda}$, where $\lambda > 0$. To frame the optimization as an inference problem, we introduce an auxiliary binary random variable $\O\in\{0,1\}$ to indicate optimality, similarly to \cite{rawlik2010approximate, levine_tutorial}. We can then express the posterior over low-cost trajectories as: $p(\traj\,|\,\O=1) \propto p(\O=1\,|\,\traj)\  p(\traj)$, for an optimality-likelihood $p(\O=1\,|\,\traj)$ and prior probability $p(\traj)$. The optimal trajectory can then be derived by \textit{maximum a posteriori} inference \eg\ minimizing the negative log of the posterior distribution and returning its mode. For convenience, we assume the distributions belong to the exponential family (see Appendix~\ref{app:map_plan} for full derivation):
\begin{align}\label{eq:map_infer}
    \traj^* &= \argmin_{\traj} - \log p(\traj\,|\,\O=1)\\
    &= \argmin_{\traj} \frac{1}{\lambda} \mathcal{F}_{obs}(\traj) + \mathcal{F}_{smooth}(\traj; \x_0)
\end{align}

\begin{figure}[t]
    \centering
    \includegraphics[width=0.8\linewidth]{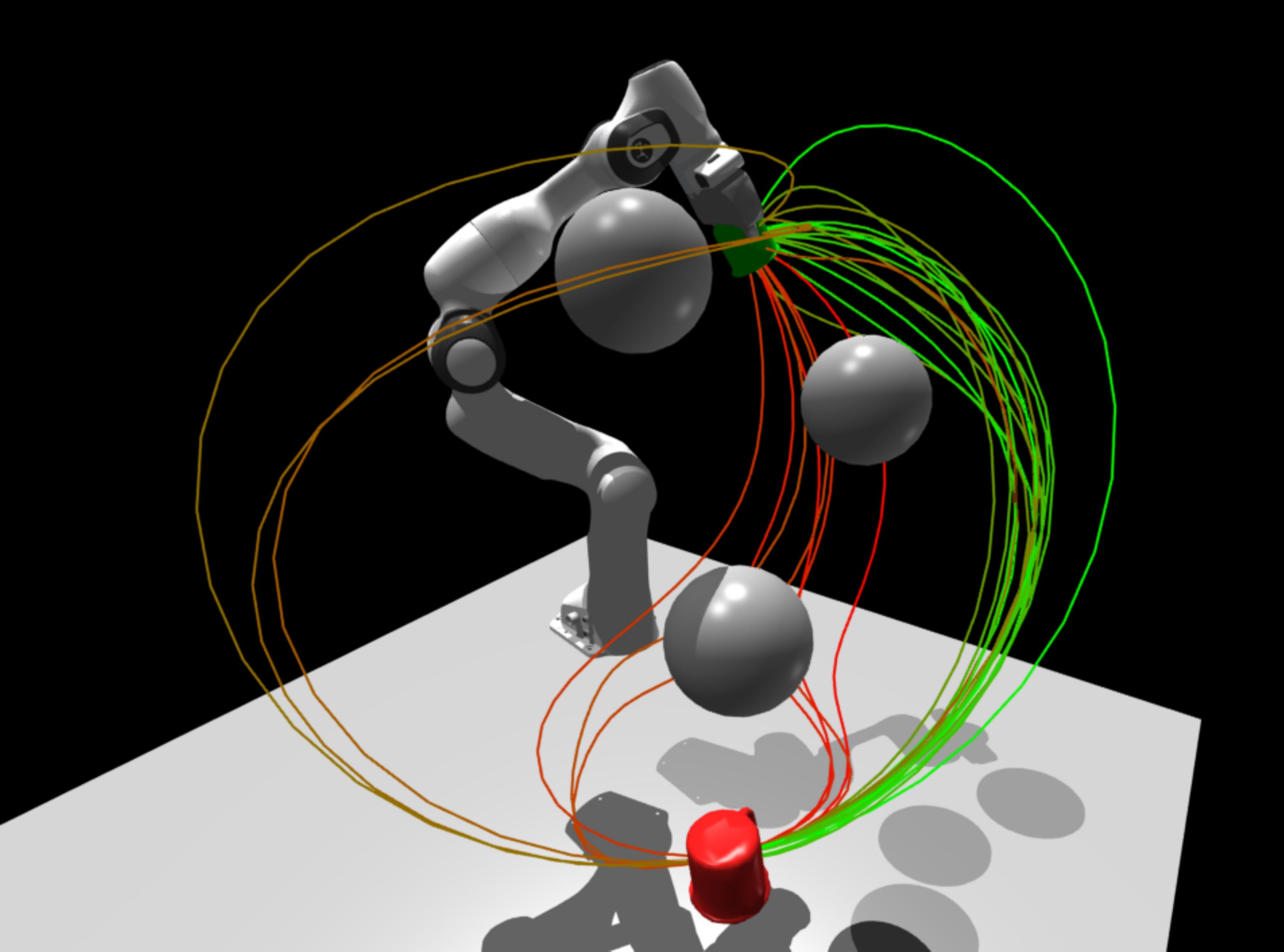}
    \captionsetup{size=footnotesize}
    	\caption{\textbf{7-Dof Trajectory Optimization.} Example of a converged trajectory distribution resulting from the proposed inference procedure. Each trajectory is a sequence of kinematic states in configuration space (joint position and velocity), and is represented by the trace of cartesian end-effector positions. Each trajectory reaches the goal position (red cup) and is collision-free with respect to three spherical obstacles. Lower-cost trajectories are marked in green and higher-cost in red. Even in a single homotopy class, the posterior distribution can be multi-modal when conditioned on cost.}\vspace{-0.6cm}
    	\label{fig:obst_distributions}
\end{figure}

\subsection{Gaussian Process Motion Planning}
\label{sec:gpmp}

The connection between approximate inference and motion planning is particularly exemplified by the GPMP class of planning algorithms~\cite{mukadam2016gaussian,mukadam2018continuous}. In this case, smoothness on generated trajectories is introduced by using a Gaussian Process prior: $p_{gp}(\tau) = \mathcal{GP}(\mathbf{\mu}(t),\, \mathcal{K}(t,t'))$, with mean function $\mu$ and covariance function $\mathcal{K}$. The distribution is then parametrized by a discrete set of support states $\thetab \triangleq [\thetab_0, ..., \thetab_N]^\top$ with a prior $p(\thetab) = \mathcal{N}(\mu, \mathcal{K})$ having a mean $\mu =[\mu(t_0), ..., \mu(t_N)]^\top$ and covariance matrix $\mathcal{K} = [\mathcal{K}(t_n,t_m)]\big|_{nm, 0\leq n,m \leq N}$. Instead of optimizing over continuous trajectory-space, we can aim to first find an optimal sparse representation, $\thetab^*$, then generate dense trajectories $\tau$ by GP-interpolation for refined collision checking and system execution~\cite{barfoot2014batch, mukadam2018continuous}. We can formulate the inference problem by augmenting the posterior distribution to include the parameter variable $\thetab$:
\begin{align}
    p(\traj, \thetab\,|\,\O=1)\ &\propto\ p(\O=1\,|\,\traj, \thetab)\, p_{gp}(\traj\,|\,\thetab)\, p(\thetab) 
\end{align}
We obtain a posterior over parameters $\thetab$ by marginalizing over trajectories $\tau$:
\begin{align}
    &p(\thetab\,|\,\O=1) \propto \int p(\O=1\,|\,\traj, \thetab)\,p_{gp}(\traj\,|\,\thetab)\, d\tau \  p(\thetab) \label{eq:posterior_1} \\
    &\ \propto \expect{p_{gp}}{ \exp\big(-\frac{1}{\lambda}C(\tau)\big)\,\Big|\, \thetab}\, \exp\big(-\frac{1}{\lambda}C(\thetab)\big) \, p(\thetab) \label{eq:posterior_2} \\
    &\ \propto  p(\O=1\,|\,\thetab)\ p(\thetab)
\end{align}
where we use the obstacle cost notation $C(\tau) = \mathcal{F}_{obs}(\tau)$ for brevity. The marginal likelihood (integral term) in \cref{eq:posterior_1} combines support state and interpolated GP collision costs. The likelihood is partitioned in \cref{eq:posterior_2} as collision costs are applied separately on $\tau$ and $\thetab$. GPMP2~\cite{Mukadam-RSS-17} proceeds by invoking a factored form of the distribution (see Appendix~\ref{app:gpmp_terms} for further discussion). We will do the same, but leave the notation in the present form for further discussion.

\section{Motion Planning as KL-Minimization}

\subsection{Bayesian posterior estimation}
Instead of only finding the mode, we might want to obtain the full posterior distribution over trajectories. This is desireable, for instance, in the context of imitation learning, where generating target distributions of expert demonstrations is necessary for learning a policy in a sample efficient manner~\cite{ke2021imitation}. One may also want to consider a distribution of expert trajectories for guiding reinforcement learning algorithms~\cite{jeong2020learning} and generating target value function estimates for energy-based policies~\cite{springenberg2020local}. This can be done by minimizing the (reverse) Kullback-Liebler divergence between a proposal distribution $q$ and the posterior:
\begin{align}\label{eq:var_objective}
\min_{q\in \mathcal{Q}}\ \lambda \dkldiv{q(\tau)}{p(\tau\,|\,\O=1)}
\end{align}
where $\min \dkldiv{q}{p} = 0$ if the proposal distribution matches the target posterior: $q(\tau) = p(\tau\,|\,\O=1)$. By selecting an optimality likelihood to be the exponentiated negative cost, $p(\O=1\,|\,\traj) \propto \exp(-\frac{1}{\lambda}C(\traj))$, we can recover the KL-regularized objective 
(see Appendix \ref{app:proof_kl_tau}).
\begin{align}\label{eq:kl_reg}
    \min_{q\in \mathcal{Q}} \mathcal{J}(q; \x_0) &=  \min_{q\in \mathcal{Q}}\  \expect{q}{C(\traj)} + \lambda\, \dkldiv{q(\traj)}{p(\traj)}
\end{align}
 which bears close resemblance to the entropy-regularized reinforcement learning literature~\cite{springenberg2020local,haarnoja2017reinforcement}.
For a trajectory distribution parametrized by a set of discrete support states $\thetab$, we can write an alternative objective:
\begin{align}\label{eq:kl_reg_theta}
q^* &= \argmin_{q\in \mathcal{Q}}\ \lambda \dkldiv{q(\thetab)}{p(\thetab\,|\,\O=1)}
\end{align}
where the solution also minimizes an entropy-regularized objective similar to \cref{eq:kl_reg} (see Appendix~\ref{app:proof_kl_theta}).

\subsection{Value approximation for Entropy Regularized Planning}
Provided that the target distribution is contained in $\mathcal{Q}$, we infer the form of the optimal $q$ from \cref{eq:var_objective} to be :
\begin{align}
    q^*(\tau) &= p(\traj\,|\,\O=1) \\
    &= \frac{p(\O=1\,|\,\traj)\ p(\traj)}{Z} \\
    &= \frac{\exp(-\frac{1}{\lambda}C(\traj))\ p(\traj)}{\int \exp(-\frac{1}{\lambda}C(\traj))\ p(\traj)\ d\traj}
\end{align}
Substituting this distribution back into the objective in \cref{eq:kl_reg}, we obtain the optimal value function, $V^*(\x_0)=\min_{q} \mathcal{J}(q; \x_0)$, to be: 
\begin{align}\label{eq:optimal_value}
V^*(\x_0) = -\temp \log \int \exp \left( -\frac{1}{\temp}C(\tau) \right) p(\traj; \x_0)\, d\tau 
\end{align}
which is otherwise known as the free-energy~\cite{theodorou2012duality}, and is the continuous-time analog to the soft-value function in the RL literature~\cite{haarnoja2017reinforcement}.
Note that this is also equal to the negative log-partition function: $V^*(\x_0) = -\lambda \log Z$, where $Z = \log \expect{p(\traj)}{p(\O=1\,|\,\traj)}$. Using the proof in Appendix~\ref{app:proof_kl_tau}, we can then directly relate the KL-divergence to the difference in values:
\begin{align}
    \dkldiv{q(\tau)}{q^*(\tau)} = \frac{1}{\lambda} \Big( V^q(\x_0) - V^*(\x_0)\Big)
\end{align}
where we use the equivalent notation for the current value estimate: $V^q(\x_0) = \mathcal{J}(q; \x_0)$. Therefore, any variational inference algorithm which is guaranteed to decrease the KL divergence at a rate $\Gamma$ will approach the target Value function at a rate $\lambda \Gamma$ from above.

Given that a similar entropy-regularized objective can be derived for support states $\thetab$ (Appendix~\ref{app:proof_kl_theta}), we can make the following equivalence:
\begin{align}
    q^*(\thetab) &= \frac{\exp(-\frac{1}{\lambda}\tilde{C}(\thetab))\ p(\thetab)}{\int \exp(-\frac{1}{\lambda}\tilde{C}(\thetab))\ p(\thetab)\ d\thetab} \\
    V^*(\x_0) &= -\temp \log \int \exp \left( -\frac{1}{\temp}\tilde{C}(\thetab) \right) p(\thetab)\, d\thetab \\
    \tilde{C}(\thetab) &= \expect{p_{gp}}{C(\tau)\,\Big|\, \thetab} + C(\thetab) \label{eq:cost_tilde}
\end{align}
where we combine the collision costs on support and interpolated GP states in \cref{eq:cost_tilde}.

\section{Stein Variational Gradient Descent}

Variational inference (VI) is a powerful tool for approximating challenging probability densities in Bayesian statistics.  As opposed to MCMC methods, VI formulates inference as an optimization problem. A proposal distribution $q(\thetab)$, belonging to a family $\mathcal{Q}$, is chosen to minimize the KL-divergence with the target posterior distribution $p(\thetab\,|x)$ over latent variable $\thetab$: 
\begin{align}
 q^*(\thetab) &= \min_{q\in\mathcal{Q}} \dkldiv{q(\thetab)}{p(\thetab\,|\,x)}
\end{align}

Traditional VI methods typically require careful selection of the distribution class $\mathcal{Q}$, which is often chosen to have a tractable parameteric form at the expense of introducing bias. 
A recently developed method,\textit{ Stein Variational Gradient Descent}~\cite{svgd_2016,matrix_svgd_2019}, avoids the challenge of determining an appropriate $\mathcal{Q}$ by leveraging a non-parameteric, particle based representation of the posterior distribution.  This approach approximates a posterior $p(\thetab | x)$  with a set of particles $\{\thetab^i\}_{i=1}^{N_p}$, $\thetab^i \in \real^p$. The particles are iteratively updated according to $\thetab^i \leftarrow \thetab^i + \epsilon \bm{\phi}^*(\thetab^i)$,  given a step-size $\epsilon$. 
The function $\bm{\phi}^*(\cdot)$ lies in the unit-ball of a reproducing kernel Hilbert space (RKHS). This RKHS is characterized by a positive-definite kernel $k(\cdot, \cdot)$. The term $\bm{\phi}^*(\cdot)$ represents the optimal perturbation or velocity field (i.e. gradient direction) which maximally decreases the KL-divergence:
\begin{align}
\bm{\phi}^*  = \argmax_{\bm{\phi} \in \mathcal{H}} \Big\{ - \nabla_\epsilon \dkldiv{ q_{\left[\epsilon \bm{\phi}\right]}}{p(\thetab\,|\,x)}\,\mathrm{s.t.}\, \norm{\bm{\phi}}_{\mathcal{H}} \leq 1\Big\},
\end{align}
where $q_{\left[\epsilon \bm{\phi} \right]}$  indicates the particle distribution resulting from taking an update step. This has been shown to yield a closed-form solution~\cite{svgd_2016}
which can be interpreted as a functional gradient in RKHS, and can be approximated with the set of particles:
\begin{align}
\hat{\bm{\phi}}^*(\thetab) = \frac{1}{N_p}\sum_{j=1}^{N_p}
\Big[
k(\thetab^j, \thetab)\nabla_{\thetab^j}\log p(\thetab^j || x)
+ \nabla_{\thetab^j} k(\thetab^j, \thetab) \Big].
\label{eq:phi_hat}
\end{align}
\cref{eq:phi_hat} has two terms that control different aspects of the algorithm. The first term is essentially a scaled gradient of the log-likelihood over the posterior's particle approximation. The second term is known as the {\em repulsive force}. Intuitively, it pushes particles apart when they get too close to each other and prevents them from collapsing into a single mode. This allows the method to approximate complex, possibly multi-modal posteriors. For the case of a  single particle, the method reduces to a standard optimization of the log-likelihood or a MAP estimate of the posterior as the repulsive force term vanishes, \textit{i.e.} $\nabla_{\thetab} k(\thetab, \thetab) = 0$. SVGD's optimization structure empirically provides better particle efficiency than other popular sampling procedures, such as Markov Chain Monte Carlo~\cite{Chen2019SteinPM}. The deterministic, gradient-based updates result in smooth transformations of the proposal distribution, a property which makes SVGD particularly attractive for trajectory optimization and inference.

\subsection{Hessian-Scaled Kernels and Second-Order SVGD}
\label{sec:hess_svgd}

As discussed in \cite{svn_2018, matrix_svgd_2019}, the convergence and accuracy of the SVGD algorithm can be largely improved by incorporating curvature information into both the kernel and the update rule in \cref{eq:phi_hat}. For instance, a positive-definite matrix $M$ can be used as a metric to scale inter-particle distances inside of an \textit{anisotropic} RBF kernel: $k(\thetab^j, \thetab^i) = \exp \big(-\frac{1}{2h}(\thetab^j - \thetab^i)^\top M (\thetab^j - \thetab^i) \big)$, where $h$ is the bandwidth parameter. Curvature information can then be shared across particles by averaging their local Hessian evaluations. Specifically, denoting the negative Hessian matrix to be $H(\thetab) = -\nabla^2_\thetab \log p(\thetab\,|\,x)$, we can define the metric $M = \frac{1}{N_p}\sum_{j=1}^{N_p} H(\thetab^j)$, which is computed using $\thetab^j$-values from the previous iteration. 
Given $M$ is constant across the particle set, it can then be used as a pre-conditioner in the SVGD update:
\begin{align}\label{eq:newton_svgd}
    \thetab^i \leftarrow \thetab^i + \epsilon M^{-1} \bm{\phi}^*(\thetab^i)
\end{align}

\section{Stein Variational Motion Planning}
\label{sec:svmp}

The inference procedure in \cref{sec:hess_svgd} can be employed for motion planning by maintaining a set of
trajectory particles, evaluating the first- and second-order gradients for planning cost and prior on each particle, and performing the update step. This would effectively result in a Newton-style version of the \textit{SV-TrajOpt} algorithm in \cite{svmpc_2020}. However, computing the full Hessian via back-propagation, for example, is prohibitively expensive for long trajectory horizons and high-dimensional state spaces. 
Instead, we can opt to leverage a sparse Gauss-Newton approximation of the Hessian, which can be derived by formulating the target distribution as a \textit{factor-graph}, given the factored representation in Appendix~\ref{app:gpmp_terms}. 
We can then express the sum of the log-likelihood and log-prior as $-\frac{1}{2}\norm{\mathbf{h}(\thetab^i)}^2_{\mathbf{\Sigma}} - \frac{1}{2}\norm{\thetab^i-\mu}^2_{\mathcal{K}}$. The log-posterior gradient term is then:
\begin{align}
 \nabla_\thetab \log p(\thetab^i\,|\,\O=1) = -\mathcal{K}^{-1}(\thetab^i-\mu) - \mathbf{J}_i^\top \mathbf{\Sigma}^{-1}\mathbf{h}(\thetab^i) \ ,
\end{align}
where $\mathbf{J}_i = \parder{\mathbf{h}}{\thetab}|_{\thetab=\thetab^i}$ is the likelihood Jacobian. The Gauss-Newton Hessian approximation is then $\mathbf{H}(\thetab^i) = \mathcal{K}^{-1} + \mathbf{J}_i^\top \mathbf{\Sigma}^{-1} \mathbf{J}_i$, and is exactly sparse block-diagonal~\cite{mukadam2018continuous}. As in \cref{sec:hess_svgd}, one way to construct a positive-definite metric $\mathbf{M}$ is to take the average of particle Hessians. The resulting matrix retains the same sparsity structure as the individual Hessian matrices. The metric can then be used to define the anisotropic kernel, and allow us to incorporate curvature information in computing the SVGD functional gradient, $\bm{\phi}^*(\thetab^i)$.
We can then compute the update to each particle by solving the linear system for $\delta \thetab^i$, in batch : $\mathbf{M}\, \delta \thetab^i = \bm{\phi}^*(\thetab^i)$. The full SVGP-MP algorithm is outlined in Appendix~\ref{app:algorithm}. Note that, for the single particle case ($N_p=1$), we recover the original GPMP2 algorithm.

\subsection{Particle-based Value Function Approximation}
The target posterior can be approximated by the empirical particle distribution $\hat{q}(\thetab^i) = \sum_{i=1}^{N_p} w^i\delta(\thetab^i)$, with weights
\begin{align}
&    w^i = \frac{\exp(-\frac{1}{\lambda} \tilde{C}(\thetab^i))\ p(\thetab^i)}{\sum_{j=1}^{N_p} \exp(-\frac{1}{\lambda} \tilde{C}(\thetab^j)) \ p(\thetab^j)}    
\end{align}
We can then provide an estimate for the optimal Value function:
\begin{align}
    \hat{V}^q(\x_0) &= -\lambda \log \sum_{j=1}^{N_p} \exp \Big(-\frac{1}{\lambda} \tilde{C}(\thetab^j) + \log p(\thetab^j) \Big)
\end{align}


\section{Experiments}
\label{sec:experiments}

For all experiments, we use the anisotropic RBF kernel, and take the metric to be the average Hessian across particles. The GPU-accelerated simulator IsaacGym~\cite{isaac-gym} was used for visualization. Preliminary results for the reaching task are shown in Appendix~\ref{app:results}, where value estimates and cost statistics with and without obstacles are presented.  Qualitative results depicting a time-lapse of converging trajectory distributions are presented in the supplementary video. We do not include a GP-interpolation factor for generating the included results, however we leave this for future work.


\section{Conclusion}
\label{sec:conclusion}
Future work will include comparison to other probabilistic motion planners~\cite{mukadam2018continuous, kalakrishnan_stomp:_2011}, a receding horizon formulation (similarly to \cite{svmpc_2020,barcelos2021dual}), and application of the proposed algorithm to imitation learning and model-based reinforcement learning problems. Significant speed-ups can also be obtained by leveraging sparse-linear algebra solvers to exploit the inherent structure, and integrating GP-interpolation to reduce the number of necessary support states needed to define particle trajectories.

\bibliographystyle{plainnat}
\bibliography{references}

\begin{appendices}

\section{Motion Planning as MAP Inference}
Trajectory optimization can be frame as an inference problem:
\label{app:map_plan}
\begin{align}\label{eq:map_infer}
    \traj^* &= \argmin_{\traj} - \log p(\traj\,|\,\O=1)\\
    &= \argmin_{\traj} - \log p(\O=1\,|\,\traj)\  - \log p(\traj) \\
    &= \argmin_{\traj} - \log \exp(-\frac{1}{\lambda}\mathcal{F}_{obs}(\traj)) \\
    &\quad - \log \exp(- \mathcal{F}_{smooth}(\traj; \x_0)) \\
    &= \argmin_{\traj} \frac{1}{\lambda} \mathcal{F}_{obs}(\traj) + \mathcal{F}_{smooth}(\traj; \x_0)
\end{align}

\section{Factor graph formulation}
\label{app:gpmp_terms}

We demonstrate the equivalance in notation with the factored form described in GPMP2~\cite{mukadam2018continuous}. 

\textbf{Prior:} The prior $p(\thetab)$ corresponds to the product of start, goal, and GP-prior terms:
\begin{align}
    p(\thetab)\quad &\propto \quad f_0^p(\thetab_0) f_N^p(\thetab_N)  \prod_{n=0}^{N-1} f_n^{gp} (\thetab_n, \thetab_{n+1})\\
    &=\quad \exp\big(-\frac{1}{2}\norm{\thetab - \mu}^2_{\mathcal{K}}\big)
\end{align}

\textbf{Likelihood:} The likelihood can be expressed as the product of support state and GP-interpolated collision \textit{marginal}-likelihood terms
\begin{align}
&p(\O=1\,|\,\thetab)\\
= &\int p(\O=1\,|\,\traj, \thetab)\,p_{gp}(\traj\,|\,\thetab)\, d\tau \quad \\
= & \int \exp\big(-\frac{1}{\lambda}C(\tau)\big)\,\exp\big(-\frac{1}{\lambda}C(\thetab)\big) \,p_{gp}(\traj\,|\,\thetab)\, d\tau \quad \\
 = &\ \expect{p_{gp}}{ \exp\big(-\frac{1}{\lambda}C(\tau)\big)\,\Big|\, \thetab}\, \exp\big(-\frac{1}{\lambda}C(\thetab)\big) \\
\end{align}

Note that the first term requires marginalizing over interpolated trajectories. This can be incorporated by defining an interpolation factor on adjacent support states, allowing the following factored form of the likelihood function:
\begin{align}
&p(\O=1\,|\,\thetab)\\
= &\prod_{n=0}^N f_n^{intp}(\thetab_n,\thetab_{n+1}) f_n^{obs}(\thetab_n)\\
= &\exp\big(-\frac{1}{2\lambda}\norm{\mathbf{h}(\thetab)}^2_{\mathbf{\Sigma}}\big)
\end{align}
where in the last line, we combine factor costs into a single vector-valued cost function $\mathbf{h}(\cdot)$, weighted by the sparse precision matrix $\mathbf{\Sigma}^{-1}$. This can be expressed as the combined quadratic cost over support states:
\begin{align}
    \tilde{C}(\thetab) = \frac{1}{2}\mathbf{h}(\thetab)^\top \mathbf{\Sigma}^{-1}\mathbf{h}(\thetab) .
\end{align}

\section{KL-regularized motion planning}
\subsection{For Continuous Trajectories}
\label{app:proof_kl_tau}
\begin{theorem} The optimal proposal distribution $q^*(\tau)$ which minimizes the reverse KL-divergence (or I-projection) over continuous trajectories $\tau$ is also the solution to the KL-regularized expected-cost objective.
\end{theorem}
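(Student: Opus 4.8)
The plan is to show that the reverse-KL objective in \cref{eq:var_objective} and the entropy-regularized cost objective in \cref{eq:kl_reg} differ only by a constant that is independent of $q$, so that the two problems share the same minimizer. First I would expand the divergence using its definition $\dkldiv{q(\traj)}{p(\traj\,|\,\O=1)} = \int q(\traj)\log\frac{q(\traj)}{p(\traj\,|\,\O=1)}\,d\traj$ and substitute Bayes' rule $p(\traj\,|\,\O=1) = p(\O=1\,|\,\traj)\,p(\traj)/Z$, where $Z = \int p(\O=1\,|\,\traj)\,p(\traj)\,d\traj$ is the normalizing partition function.

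Next I would insert the chosen optimality likelihood $p(\O=1\,|\,\traj)\propto\exp(-\tfrac{1}{\temp}C(\traj))$, so that $-\log p(\O=1\,|\,\traj) = \tfrac{1}{\temp}C(\traj) + \text{const}$. Collecting terms, the log-ratio splits into a prior-divergence piece $\log\frac{q(\traj)}{p(\traj)}$, a cost piece $\tfrac{1}{\temp}C(\traj)$, and the $q$-independent pieces $\log Z$ and the likelihood's proportionality constant. Multiplying through by $\temp$ and taking the expectation under $q$ (using $\int q(\traj)\,d\traj = 1$) then yields
\begin{align}
\temp\,\dkldiv{q(\traj)}{p(\traj\,|\,\O=1)} = \expect{q}{C(\traj)} + \temp\,\dkldiv{q(\traj)}{p(\traj)} + \text{const},
\end{align}
where the constant collects $\temp\log Z$ together with the likelihood normalization and does not depend on $q$. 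The right-hand side, modulo this constant, is exactly $\mathcal{J}(q;\x_0)$ from \cref{eq:kl_reg}; I would also note that $-\temp\log Z = V^*(\x_0)$, linking the dropped constant to the optimal value in \cref{eq:optimal_value}.

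Finally, since the two objectives differ only by an additive constant, their argmins coincide, so the minimizer $q^*$ of the reverse KL is also the minimizer of the entropy-regularized expected-cost objective. Because $\dkldiv{q(\traj)}{p(\traj\,|\,\O=1)}\ge 0$ with equality precisely when $q(\traj) = p(\traj\,|\,\O=1)$, this common minimizer is $q^*(\traj)=p(\traj\,|\,\O=1)$, as already asserted in \cref{eq:var_objective}. The only delicate point will be the bookkeeping of normalization constants: verifying that both the partition function $Z$ and the proportionality constant hidden in the optimality likelihood are genuinely independent of $q$ and therefore fall out of the optimization. Everything else is a direct rearrangement of the divergence integral, and the same argument carries over verbatim to the support-state formulation of \cref{eq:kl_reg_theta} with $C(\traj)$ replaced by $\tilde{C}(\thetab)$.
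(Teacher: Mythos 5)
Your proposal is correct and follows essentially the same route as the paper's proof: expand the reverse KL, apply Bayes' rule, discard the $q$-independent $\log Z$ (and likelihood normalization), and substitute $p(\O=1\,|\,\traj)\propto\exp(-\frac{1}{\lambda}C(\traj))$ to recover the KL-regularized objective. The only cosmetic difference is that you keep the additive constant explicit and identify it with $-\lambda\log Z = V^*(\x_0)$, which the paper defers to the main text rather than folding into the proof.
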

\begin{proof}
\begin{align}
& q^*(\tau) \\
&= \argmin_{q\in \mathcal{Q}}\ \lambda\,\dkldiv{q(\traj)}{p(\traj\,|\,\O=1)} \\
&= \argmin_{q\in \mathcal{Q}}\ \lambda\,\expect{q}{\log q(\traj) - \log p(\traj\,|\,\O=1)} \\\label{eq:bayes_step1}
&= \argmin_{q\in \mathcal{Q}}\ \lambda\,\expect{q}{\log q(\traj) - \log p(\O=1\,|\,\traj)\\
&- \log p(\traj) + \log Z} \\\label{eq:bayes_step2}
&= \argmin_{q\in \mathcal{Q}}\ \lambda\,\expect{q}{\log q(\traj) - \log p(\O=1\,|\,\traj) - \log p(\traj)} \\
&= \argmin_{q\in \mathcal{Q}}\ - \lambda\,\expect{q}{\log p(\O=1\,|\,\traj)}  + \lambda\,\expect{q}{\log\frac{q(\traj)}{p(\traj)}}\\
&= \argmin_{q\in \mathcal{Q}}\ - \lambda\,\expect{q}{\log p(\O=1\,|\,\traj)} + \lambda\, \dkldiv{q(\traj)}{p(\traj)}
\end{align}
with application of Bayes' rule in \cref{eq:bayes_step1}. 
The log-partition function $ \log Z = \log \expect{p(\traj)}{p(\O=1\,|\,\traj)} $ is a constant with respect to $q$, and can be dropped from the optimization (\cref{eq:bayes_step2}). 

By selecting an optimality likelihood to be the exponentiated negative cost: $p(\O=1\,|\,\traj) \propto \exp(-\frac{1}{\lambda}C(\traj))$, we can simplify the solution:
\begin{align}
    q^*(\tau) &= \argmin_{q\in \mathcal{Q}}\ - \lambda\,\expect{q}{\log \exp(-\frac{1}{\lambda}C(\traj))} + \lambda\, \dkldiv{q(\traj)}{p(\traj)} \\
&= \argmin_{q\in \mathcal{Q}}\ \expect{q}{C(\traj)} + \lambda\, \dkldiv{q(\traj)}{p(\traj)} ,
\end{align}
recovering the KL-regularized objective.
\end{proof}

\subsection{For Distribution Parameters}
\label{app:proof_kl_theta}

\begin{theorem} The optimal proposal distribution $q^*(\thetab)$ which minimizes the reverse KL-divergence over a discrete set of trajectory parameters $\thetab$ is also the solution to a KL-regularized expected-cost objective.
\end{theorem}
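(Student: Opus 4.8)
The plan is to mirror the argument of the preceding theorem (Appendix~\ref{app:proof_kl_tau}) almost verbatim, replacing the continuous trajectory $\traj$ with the support-state parameter $\thetab$ and the optimality likelihood with its marginalized counterpart $p(\O=1\,|\,\thetab)$. First I would write the objective as $q^*(\thetab) = \argmin_{q\in\mathcal{Q}}\ \lambda\,\expect{q}{\log q(\thetab) - \log p(\thetab\,|\,\O=1)}$ and apply Bayes' rule $p(\thetab\,|\,\O=1) \propto p(\O=1\,|\,\thetab)\,p(\thetab)$ to split the log-posterior into a log-likelihood term, a log-prior term, and the log-partition $\log Z$.

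Next I would observe that $\log Z$ is constant in $q$ and drop it from the minimization, then regroup the surviving terms into the expected negative log-likelihood plus $\lambda\,\dkldiv{q(\thetab)}{p(\thetab)}$, exactly as in the continuous case. At this point the only structural difference from the trajectory setting is the form of the likelihood, so the final step is to substitute $p(\O=1\,|\,\thetab) \propto \exp(-\frac{1}{\lambda}\tilde{C}(\thetab))$ and simplify the resulting $-\lambda\,\expect{q}{\log p(\O=1\,|\,\thetab)}$ term to $\expect{q}{\tilde{C}(\thetab)}$, yielding $\expect{q}{\tilde{C}(\thetab)} + \lambda\,\dkldiv{q(\thetab)}{p(\thetab)}$ and recovering the KL-regularized expected-cost objective with $\tilde{C}$ in place of $C$.

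The main obstacle is justifying this optimality-likelihood form for $\thetab$, since unlike the continuous case (where it is posited directly) it arises here as the marginal $\int p(\O=1\,|\,\traj,\thetab)\,p_{gp}(\traj\,|\,\thetab)\,d\traj$ appearing in \cref{eq:posterior_1,eq:posterior_2}. I would lean on the factor-graph derivation of Appendix~\ref{app:gpmp_terms}: introducing the interpolation factor on adjacent support states lets the marginal likelihood be written as $\exp(-\frac{1}{2\lambda}\norm{\mathbf{h}(\thetab)}^2_{\mathbf{\Sigma}})$, i.e. $\exp(-\frac{1}{\lambda}\tilde{C}(\thetab))$ with $\tilde{C}(\thetab) = \expect{p_{gp}}{C(\traj)\,|\,\thetab} + C(\thetab)$ as in \cref{eq:cost_tilde}. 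I would flag that this identification folds the expectation over interpolated states into the exponent, which is exact under the quadratic-cost factor-graph model but would otherwise be a Jensen-type approximation; once this step is granted, the remainder of the derivation is identical to Appendix~\ref{app:proof_kl_tau}, and no additional machinery is required.
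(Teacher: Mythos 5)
Your proposal follows essentially the same route as the paper's proof in Appendix~\ref{app:proof_kl_theta}: expand the KL-divergence, apply Bayes' rule, drop the constant $\log Z$, and reduce the surviving likelihood term to $\expect{q}{\tilde{C}(\thetab)}$ plus $\lambda\,\dkldiv{q(\thetab)}{p(\thetab)}$. The one point of divergence is how the marginalized likelihood is handled: the paper keeps $p(\O=1\,|\,\thetab)$ in the form $\expect{p_{gp}}{\exp(-\frac{1}{\lambda}C(\traj))\,\big|\,\thetab}\,\exp(-\frac{1}{\lambda}C(\thetab))$ and passes from $-\lambda\log\expect{p_{gp}}{\exp(-\frac{1}{\lambda}C(\traj))\,\big|\,\thetab}$ to $\expect{p_{gp}}{C(\traj)\,\big|\,\thetab}$ via a Jensen-type bound, flagged only by an underbraced $\leq$ inside an otherwise equality chain, whereas you posit $p(\O=1\,|\,\thetab)\propto\exp(-\frac{1}{\lambda}\tilde{C}(\thetab))$ up front through the factor-graph construction of Appendix~\ref{app:gpmp_terms}. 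Your explicit caveat that this identification folds the expectation into the exponent exactly only under the quadratic factor-graph model, and is otherwise a Jensen-type approximation, is precisely the step the paper's own proof glosses over, so your version is, if anything, the more careful of the two.
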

\begin{proof}
\begin{align}
& q^*(\thetab) \\
&= \argmin_{q\in \mathcal{Q}}\ \lambda\,\dkldiv{q(\thetab)}{p(\thetab\,|\,\O=1)} \label{eq:proof_theta_kl_min}\\
&= \argmin_{q\in \mathcal{Q}}\ \lambda\,\expect{q}{\log q(\thetab) - \log p(\thetab\,|\,\O=1)} \\
&= \argmin_{q\in \mathcal{Q}}\ \lambda\,\expect{q}{\log q(\thetab) \\
& \quad - \log p(\O=1\,|\,\thetab) - \log p(\thetab) + \log Z} \\
&= \argmin_{q\in \mathcal{Q}}\ \lambda\,\expect{q}{\log q(\thetab) - \log p(\O=1\,|\,\thetab) - \log p(\thetab)} \\
&= \argmin_{q\in \mathcal{Q}}\ \lambda\,\expect{q}{\log q(\thetab) - \log \expect{p_{gp}}{ \exp\big(-\frac{1}{\lambda}C(\tau)\big)\,\Big|\, \thetab}\\ 
& \hspace{40pt} - \log \exp\big(-\frac{1}{\lambda}C(\thetab)\big) - \log p(\thetab)} \\
&= \argmin_{q\in \mathcal{Q}}\ \underbrace{-\lambda \expect{q}{ \log \expect{p_{gp}}{ \exp\big(-\frac{1}{\lambda}C(\tau)\big)\,\Big|\, \thetab}}}_{\leq -\lambda \expect{q, p_{gp}}{\log \exp\big(-\frac{1}{\lambda}C(\tau)\big)\,\Big|\, \thetab}}\\
&\hspace{40pt} + \expect{q}{C(\thetab)}  + \lambda\,\expect{q}{\log\frac{q(\thetab)}{p(\thetab)}}\\[10pt]
&=  \argmin_{q\in \mathcal{Q}}\ \expect{q(\thetab)}{\expect{p_{gp}}{C(\tau)\,\Big|\, \thetab}} +\\
& \hspace{40pt} \expect{q(\thetab)}{C(\thetab)}+ \lambda\, \dkldiv{q(\thetab)}{p(\thetab)} \\[10pt]
&=  \argmin_{q\in \mathcal{Q}}\ \expect{q(\thetab)}{C(\thetab) + \expect{p_{gp}}{C(\tau)\,\Big|\, \thetab}} +\\
& \hspace{40pt} \lambda\, \dkldiv{q(\thetab)}{p(\thetab)} \\[10pt]
&=  \argmin_{q\in \mathcal{Q}}\ \expect{q(\thetab)}{\tilde{C}(\thetab)} + \lambda\, \dkldiv{q(\thetab)}{p(\thetab)}
\end{align}

where $\tilde{C}(\thetab) \equiv C(\thetab) + \expect{p_{gp}}{C(\tau)\,\Big|\, \thetab}$. The KL-regularized objective for support states $\thetab$ can then be solved by minimizing the original KL-minimization objective in \cref{eq:proof_theta_kl_min}. 
\end{proof}

\newpage
\section{Algorithm}
\label{app:algorithm}
  {\SetAlgoNoLine%
  \begin{algorithm}[h]
    \DontPrintSemicolon 
    \KwIn{
        Start state $\x_0$,
        factor-graph cost function $\mathbf{h}$,
        factor-graph cost weights $\mathbf{\Sigma}$,
        prior $p(\thetab)$,
        GP-prior mean $\mu$ and covariance $\mathcal{K}$,
        kernel $k(\cdot, \cdot)$,
        metric function $M(\cdot, \cdot)$,
        step-size $\epsilon$
        }
    \vspace{10pt}
    \tcp{Initialize particles}
    Sample $\{\thetab^i\}_{i=1}^{N_p} \sim p(\thetab) $\\[5pt]
    \While{Not Converged} {
    \tcp{Batched computation}
    \ForPar{$i = 1, 2, ...,N_p$}{\vspace{2pt} 
    \tcp{Cost-factor errors, Jacobians}
    $\mathbf{h}_i = \mathbf{h}(\thetab^i)$,\, $\mathbf{J}_i = \parder{\mathbf{h}}{\thetab}|_{\thetab=\thetab^i}$\\[5pt] 
    \tcp{Log-posterior gradient}
    $g(\thetab^i) = -\mathcal{K}^{-1}(\thetab^i-\mu) - \mathbf{J}_i^\top \mathbf{\Sigma}^{-1}\mathbf{h}(\thetab^i)$\\[5pt] 
    \tcp{Hessian}
    $\mathbf{H}(\thetab^i) = \mathcal{K}^{-1} + \mathbf{J}_i^\top \mathbf{\Sigma}^{-1} \mathbf{J}_i$\\[5pt] 
    }
    \tcp{Metric}
    $\mathbf{M} = M(\{\thetab^j\}, \{\mathbf{H}(\thetab^j)\}) $ \\[5pt] 
    \tcp{Set kernel function}
    $k_{\mathbf{M}}(\cdot, \cdot) \coloneqq k(\mathbf{M}^{1/2}\ \cdot\ , \mathbf{M}^{1/2}\ \cdot\ )$\\[5pt] 
    \tcp{Batched computation}
    \ForPar{$i = 1, 2, ..., N_p$ } {
    \tcp{SVGD gradient}
    $\hat{\bm{\phi}}^*(\thetab^i) = \frac{1}{N_p}\sum_{j=1}^{N_p} k_{\mathbf{M}}(\thetab^j, \thetab^i)g(\thetab^j) + \nabla_{\thetab^j} k_{\mathbf{M}}(\thetab^j, \thetab^i)$ \\[5pt] 
    \tcp{Solve sparse linear system}
    $\delta \thetab^i = {\mathbf{M}}^{-1} \hat{\bm{\phi}}^*(\thetab^i)$  \\[5pt]
    \tcp{Update particles}
    $\thetab^i \leftarrow \thetab^i + \epsilon\, \delta \thetab^i $}
    }\vspace{5pt}
    \caption{SVGP Motion Planning}
    \label{algo:svtrajopt}
  \end{algorithm}}%

\clearpage
\section{Preliminary Results}
\label{app:results}

\begin{figure}[h]
    \centering
    \captionsetup{size=footnotesize}
    \includegraphics[width=\linewidth]{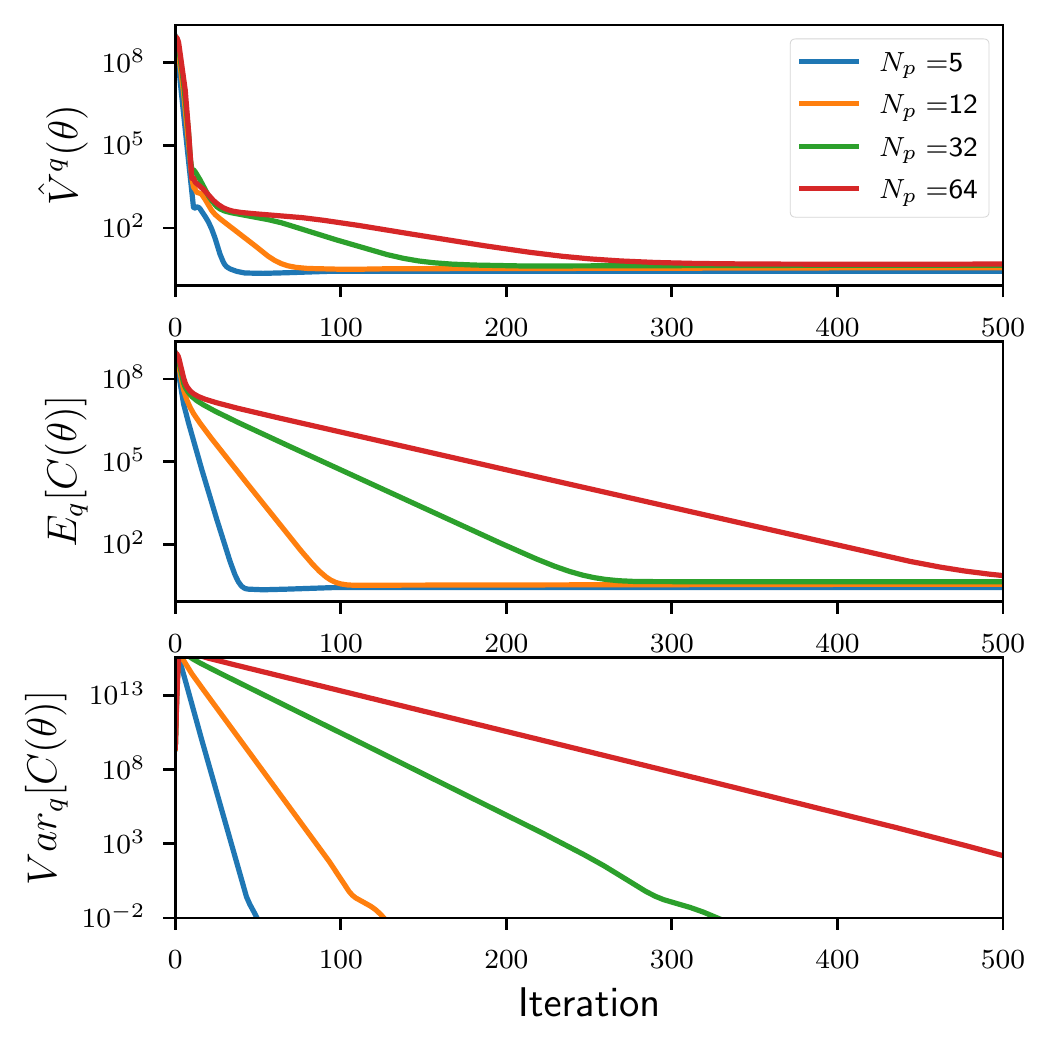}
    \caption{\textbf{Reaching in free-space.}\quad Estimates of the target value, expected cost, and cost Variance for various particle set sizes $N_p$. Results are averaged over 5 independent trials.}
    \label{fig:results_free}
\end{figure}  

Value estimates and cost statistics for the free-space reaching task are shown in \cref{fig:results_free}. The low cost-variance estimates are reflective of the high-weight placed on the goal-position cost.
Results corresponding to the three-sphere obstacle environment (\cref{fig:obst_distributions})) are found in \cref{fig:results_obst}. Particle trajectories can vary substantially, with diverse curvature, while still reaching the goal configuration (as seen in \cref{fig:obst_distributions}).

\begin{figure}[t]
    \centering
    \captionsetup{size=footnotesize}
    \includegraphics[width=\linewidth]{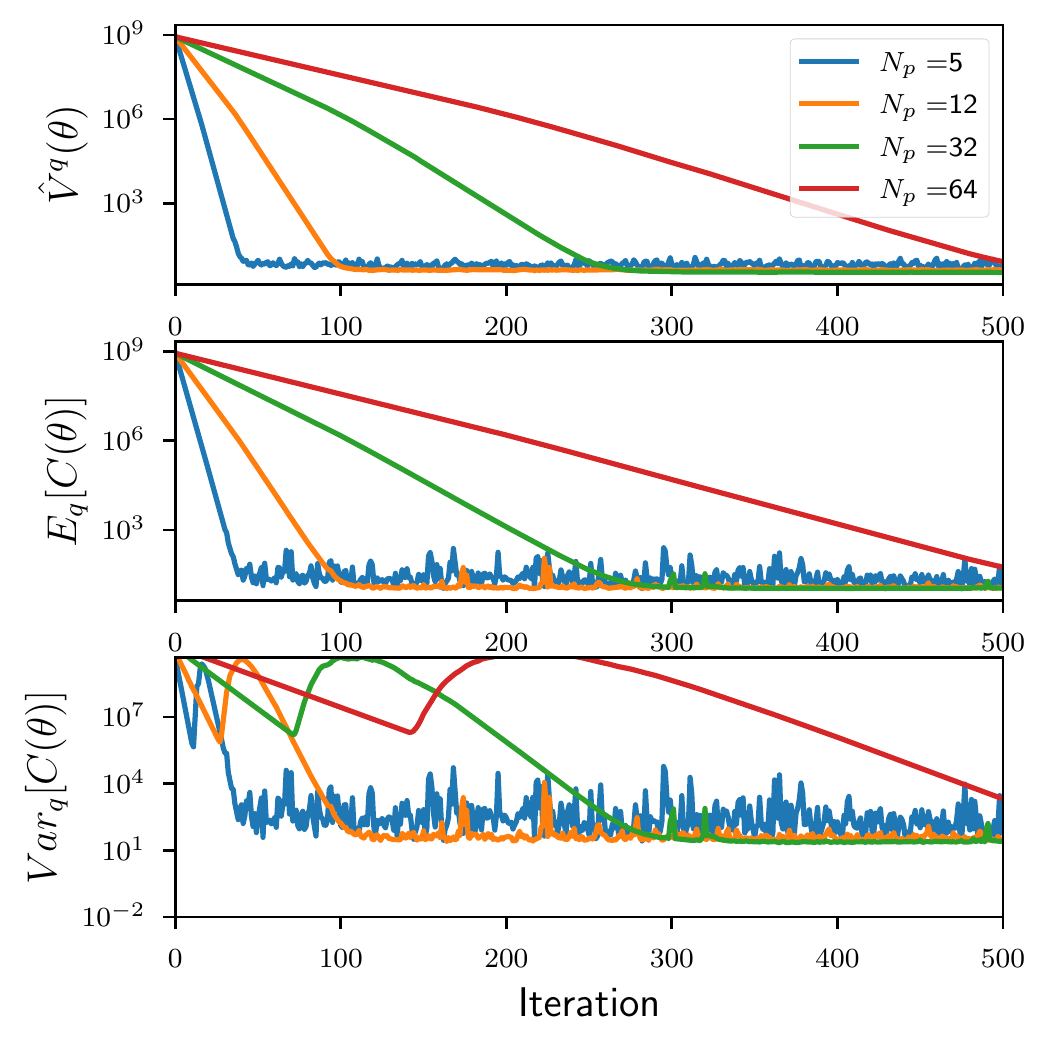}
    \caption{\textbf{Reaching around obstacles.}\quad Estimates of the target value, expected cost, and cost variance for various particle set sizes $N_p$. For comparison, a step size of $\epsilon=1$ is used in all cases. Results are averaged over 5 independent trials. Note that the logarithmic scale magnifies perturbations in lower value ranges.}
    \label{fig:results_obst}
\end{figure}  

\end{appendices}

\end{document}